\newcommand{\whittle}{m}
\newcommand{\parameter}{P}
\newcommand{\tmpstate}{u}
\newcommand{\s}{s}
\newcommand{\nstates}{M}
\newcommand{\St}{\mathcal{S}}
\newcommand{\action}{a}
\newcommand{\Action}{\mathcal{A}}
\newcommand{\reward}{r}
\newcommand{\policy}{\pi}
\newcommand{\trajectories}{\mathcal{T}}
\newcommand{\trajectory}{\tau}
\begin{document}
\title{IRL for Restless
Multi-Armed Bandits with Applications in Maternal and Child Health}
%
%
\author{Gauri Jain \inst{1} \and
Pradeep Varakantham \inst{2} \and
Haifeng Xu \inst{3} \and Aparna Taneja \inst{4} \and Prashant Doshi \inst{5} \and Milind Tambe \inst{1,4}}
\authorrunning{G. Jain et al.}
%
\institute{Harvard University \and
Singapore Management University \and University of Chicago \and Google Deepmind \and University of Georgia
}

\maketitle              
\begin{abstract}
Public health practitioners often have the goal of monitoring patients and maximizing patients' time spent in ``favorable" or healthy states while being constrained to using limited resources. Restless multi-armed bandits (RMAB) are an effective model to solve this problem as they are helpful to allocate limited resources among many agents under resource constraints, where patients behave differently depending on whether they are intervened on or not. However, RMABs assume the reward function is known. This is unrealistic in many public health settings because patients face unique challenges and it is impossible for a human to know who is most deserving of any intervention at such a large scale. To address this shortcoming, this paper is the first to present the use of inverse reinforcement learning (IRL) to learn desired rewards for RMABs, and we demonstrate improved outcomes in a maternal and child health telehealth program. First we allow public health experts to specify their goals at an aggregate or population level and propose an algorithm to design expert trajectories at scale based on those goals. Second, our algorithm WHIRL uses gradient updates to optimize the objective, allowing for efficient and accurate learning of RMAB rewards. Third, we compare with existing baselines and outperform those in terms of run-time and accuracy. Finally, we evaluate and show the usefulness of WHIRL on thousands on beneficiaries from a real-world maternal and child health setting in India. We publicly release our code here: https://github.com/Gjain234/WHIRL.
\end{abstract}
\begin{figure}
    \centering
    \includegraphics[width=0.63\linewidth]{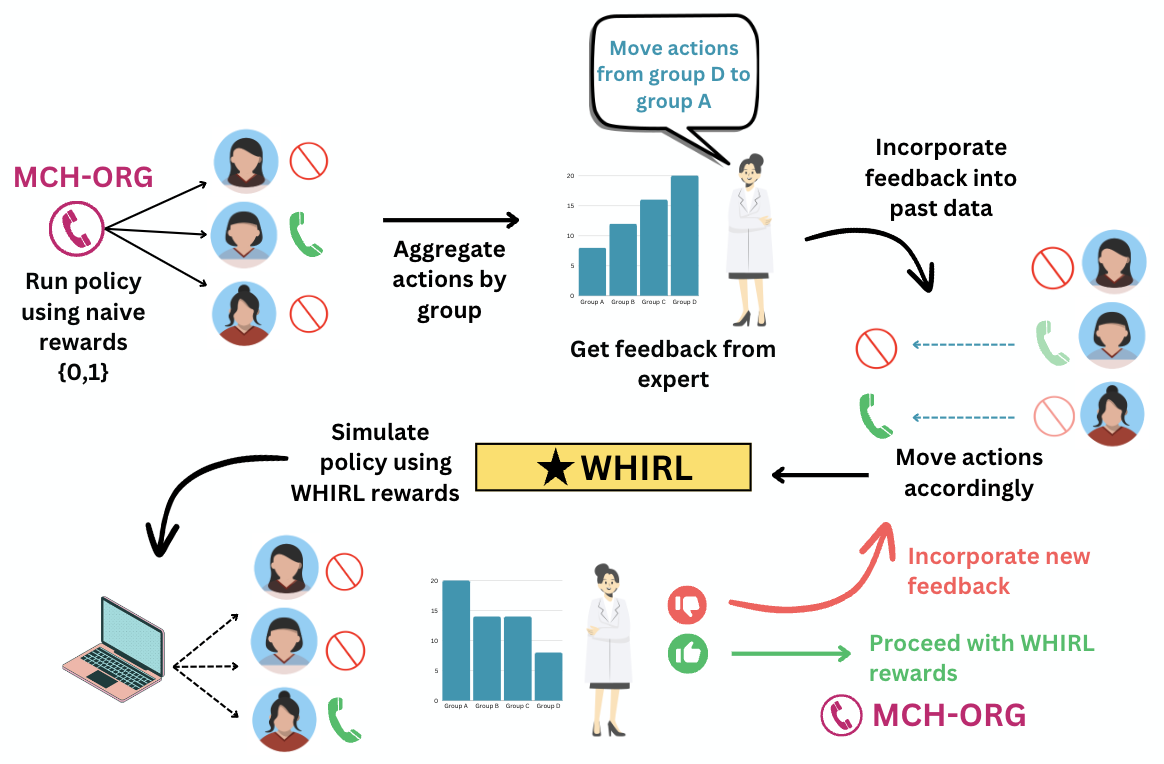}
    \caption{The full system including stakeholders. More detail in Section \ref{applied}}
    \label{fig:system}
\end{figure}
\section{Introduction}
Our work is motivated by challenges faced by Armman, a maternal and child health (MCH) nonprofit in India. Armman delivers telehealth care to beneficiaries during and after their pregnancy. The telehealth program delivers weekly automated voice messages (AVMs) with health information. Since some beneficiaries tend to stop listening to the AVMs, Armman also provides human service calls (HSCs) to encourage beneficiaries to keep listening to the AVMs. Unfortunately, the number of human callers (few tens) is much lower than the number of beneficiaries (few thousands), requiring optimal allocation of the limited HSCs to beneficiaries. Past work models this as an RMAB~\cite{weber1990index} problem where each arm is a beneficiary \cite{mate2022field} and has two states: whether they have \textit{listened} or \textit{not listened} to the AVM last week, resulting in rewards 1 or 0 respectively. The actions at each time step (each week) are a subset of the beneficiaries chosen to receive HSCs. 

A drawback of this homogeneous binary reward design is that it leads to policies that maximize the number of beneficiaries in the \textit{listened} state but ignores the specific circumstances of each beneficiary. After close interviews with members of Armman, we learned that the algorithm was giving similar priority to mothers at very different risk levels for pregnancy complication (Figure \ref{fig:before_after_actions_risk}a), which is counterintuitive because lower risk mothers have many resources and social networks they can rely on, so achieving high listening from them is significantly less important \cite{socialinequality}. This motivated us to change the current $\{0,1\}$ rewards. But risk is just one metric public health experts use to make decisions for their beneficiaries, and we quickly found that there was no clear way to manually design rewards for all of their goals. Additionally, we found that manual reward design methods can end up being even more unfair (Section \ref{applied}). Furthermore, public health experts may want to alter interventions depending on factors that change over time (e.g., education level, health risk). Therefore, we seek to learn rewards for an RMAB that can capture goals of public health experts at any given time. 

The above problem naturally fits IRL \cite{Russell98:Learning}, which learns rewards from expert demonstrations, thereby aligning the planner's sequential actions with desired outcomes. However, our problem is different from classic IRL in two crucial aspects. First, classic IRL predominantly focuses on single-agent MDPs whereas our setup has {\em thousands} of agents (beneficiaries) active simultaneously. Second, typical IRL algorithms are designed to learn from full expert trajectories, but our application's large scale makes it impossible for a human expert to completely specify the entire policy trajectories.

Our work (Figure~\ref{fig:system}) addresses these shortcomings with the following contributions: (i) we design Whittle-IRL \textbf{(WHIRL)}, the first IRL algorithm for learning RMAB rewards; (ii) we propose a novel approach for public health experts to specify aggregate behavior goals and design an algorithm to generate trajectories at a massive scale based on those goals; (iii) we show that WHIRL outperforms current IRL baselines; (iv) we apply WHIRL to a real-world MCH setting with thousands of beneficiaries and show that its use results in outcomes significantly more aligned with health expert goals.

The significance of our work extends beyond this setting since RMABs are also used to model Hepatitis treatment \cite{ayer2019prioritizing}, liver cancer screening \cite{10.1287/msom.2017.0697}, Asthma care \cite{deo_improving_2013}, and more \cite{villar_multi-armed_2015,mate2020collapsing}. Even beyond the health setting, any RMAB optimization problem can benefit from our method. We hope our new method can be helpful to the broader community of researchers using RMABs to make decisions more aligned with human experts.

\section{Related Work}
\label{sec:related}
Some of the best known early approaches for IRL include maximum entropy IRL~\cite{ziebart_maximum_nodate,arora_survey_2021}, which infers the underlying reward function that makes the least commitment to any particular trajectory, and Bayesian IRL techniques such as the fast MAP-BIRL \cite{Choi2011MAPIF}, which uses gradient updates to maximize the {\em a posteriori} likelihood witH trajectories serving as observations. These approaches however grow exponentially as the number of agents increase, so they would not work in our setting. There are also recent deep learning algorithms such as AIRL~\cite{airl} that employ an adversarial approach and stochastic forward RL techniques such as PPO, but they are known to not scale beyond tens of agents~\cite{yu2022surprising}. Previous work in multi-agent IRL focuses on problems with few agents \cite{5708862,bogert2014multi}, assumes agents are homogeneous reducing the problem to a single agent \cite{sosic2017inverse}, or learns from a human-robot system ~\cite{yu2019multi,Suresh23:DecAIRL}. The RMAB's budget constraint induces a challenging multiagent interaction that is not easily modeled by existing approaches. 

IRL has also been proposed for health settings, e.g.,
to make decisions on ventilator units and sedatives in ICUs \cite{yu_inverse_2019} or model diabetes treatments \cite{bml21}, but these have not addressed the RMAB scalability challenges. \cite{wang_scalable_2023} provides a decision-focused learning approach establishing the Whittle policy's differentiability to optimize the RMAB solution quality directly. However, we look to apply gradient-based updates to rewards instead of transition probabilities and therefore must also define a new evaluation function to directly optimize. Furthermore, WHIRL provides a framework for enabling health experts to provide feedback on a large scale to be converted to expert trajectories; such interaction with human experts has been absent in previous RMAB work. 

\section{Methodology}
\subsubsection{Preliminaries}
An RMAB is composed of $N$ arms where each follows an independent MDP. Our state space is discrete with $|\St| = \nstates$ states, and actions $\Action = \{ 0,1\}$ corresponding to not pulling or pulling the arm. $P_i(s,a,s'): \St \times \Action \times \St \rightarrow P$ defines the probability distribution of arm~$i$ in state $s$ transitioning to next state $\s' \in \St$. $R_i(s)$ is the reward function for arm~$i$ in state $s$. At each time step $h \in [H]$, the planner observes $\boldsymbol\s_h = [ \s_{h,i} ]_{i \in [N]} \in \St^N$, the states of all arms, and then chooses action ${\boldsymbol \action}_h = [\action_{h,i}]_{i \in [N]} \in \Action^N$ denoting the actions on each arms, which has to satisfy a budget constraint $\sum\nolimits_{i \in [N]} \action_{h,i} \leq K$.
Once the action is chosen, arms receive actions $\boldsymbol\action_{t}$ and transitions under $P$. The total reward is defined as $\sum\nolimits_{h = 1}^H \gamma^{h-1} \sum\nolimits_{i \in [N]} \reward_{h,i}$, where $0 < \gamma \leq 1$ is the discount factor. A policy is denoted by $\policy$ where $\policy(\boldsymbol\action \mid \boldsymbol \s)$ determines actions taken at joint state of arms. $\policy^\text{learner}$ denotes the policy being learned in WHIRL.

\paragraph{The Whittle Index}
The dominant approach to solve the RMAB problem is the Whittle index policy~\cite{whittle1988restless}. 
The Whittle Index evaluates the value of pulling an arm $i$ via a `passive subsidy', i.e., a hypothetical compensation $\whittle$ rewarded for not pulling the arm (i.e., for action $a=0$). Given state $\tmpstate \in \St$, we define the Whittle index associated to state $\tmpstate$ by:
\begin{equation}
    W_i(\tmpstate) = \inf\nolimits_{\whittle}
    \{ Q_{i}^\whittle(\tmpstate; a=0) = Q_{i}^\whittle(\tmpstate; a=1)\} \label{eqn:action-indifference}
\end{equation}
\begin{equation}
     V^{\whittle}_i(s) = \textstyle \max\nolimits_{\action} Q_{i}^\whittle(\s; \action)  \label{eqn:bellman-equation}
\end{equation}
\begin{equation}
    \textstyle Q_{i}^\whittle(\s; \action) = \textstyle \whittle \boldsymbol 1_{a = 0} + R(\s) + \gamma \sum\nolimits_{\s'} \parameter_{i}(\s,a,\s') V^{\whittle}_{i}(s') \label{eqn:bellman-equation2}
\end{equation}
The subsidy $\whittle$ is added for action $a=0$. The Whittle policy then selects arms to pull: $\policy_W(\boldsymbol\s) = \boldsymbol1_{\text{top-k}([W_i(\s_i)]_{i \in [N]})} \in \{0,1\}^N$.

\paragraph{Soft Top K}
 For our setting, we use a soft-top-k policy selection, which gives the probability of pulling each arm. In addition to its differentiability property, soft-top-k is a better policy for our setting since it captures the bounded rationality of human planners.
\begin{align}\label{eqn:differentiable-whittle-policy}
    \textstyle \policy^{\text{soft}}_W(\boldsymbol\s) = \text{soft-top-k}([W_j(\s_i)]_{i \in [N]}) \in [0,1]^N
\end{align}

\paragraph{IRL}
In IRL, the learner does not know the reward function $\mathbf{R}(\s)$ in advance. Each arm $i$ can have a different reward in each state. The goal is to learn rewards that best fit a set of $J$ expert trajectories $\trajectories = \{\trajectory^{(j)}\}_{j \in J}$, i.e., maximize $P(\trajectories|\policy^\text{learner},\Tilde{\mathbf{R}})$ where $\Tilde{\mathbf{R}}$ is the reward being learned for all beneficiaries. We denote a full trajectory over $H$ timesteps by $\trajectory = (\boldsymbol\s_1, \boldsymbol\action_1,  \cdots, \boldsymbol\s_H, \boldsymbol\action_H)$, where $\boldsymbol\s, \boldsymbol\action$ are the joint state and action of all $N$ arms.

\subsubsection{Problem Statement}
\label{sec:application_public_health}
Formally, we consider an RMAB setup where at timestep $H$ for trajectory $\trajectory$, a human expert gives an aggregate level directive: ``Move interventions from category $\mathbf{{C}}^{\tau}_{h,A} = \{ \text{arm} | \text{arm} \in {f}_\text{source}(h) \}$ to $\mathbf{{C}}^{\tau}_{h,B} = \{ \text{arm} | \text{arm} \in {f_\text{target}(h)} \}$.'' These categories represent combinations of static features about beneficiaries like their income, education, language spoken, etc., \textit{and/or} dynamic features like the current state they are in or last time they were acted on. According to the directive, we look at the past trajectory data and at each previous timestep $h=1, \cdots, H$, we allocate service calls from a subset of beneficiaries $B_{A} \subset \mathbf{{C}}^{\tau}_{h,A}$  to a subset $B_{B} \subset \mathbf{{C}}^{\tau}_{h,B}$ to generate expert trajectories $\trajectories^\text{expert}$. We then learn the underlying RMAB reward estimation $\Tilde{\mathbf{R}}$ by maximizing $P(\trajectories^\text{expert}|\policy^\text{learner},\Tilde{\mathbf{R}})$. From timestep $H$ onward, we allocate service calls using an RMAB policy generated by the new $\Tilde{\mathbf{R}}$.

\subsubsection{IRL Approach for RMAB: WHIRL}
\begin{figure*}
    \centering
    \includegraphics[width=0.9\linewidth]{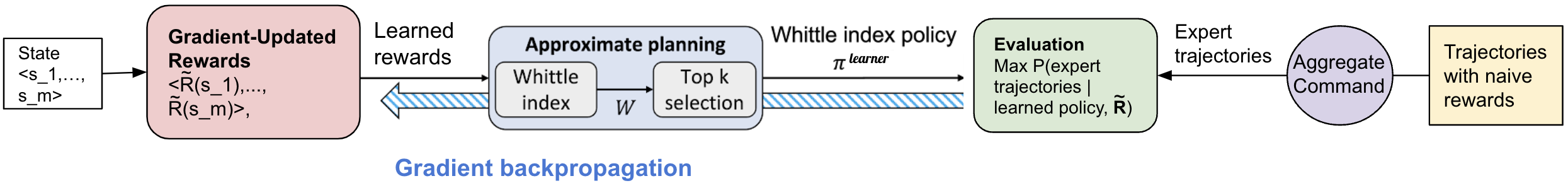}
    \caption{WHIRL iterates through the Whittle policy (blue) to estimate the final evaluation (green) and run gradient ascent to update rewards (red) using Eqn \ref{eqn:differentiable-whittle-policy-derivative}. In yellow, $\trajectories^\text{expert}$ is generated from an aggregate directive (purple) and used in  evaluation.}
    \label{fig:reward_dfl}
\end{figure*}
We propose WHIRL (Algorithm \ref{alg:approximate-decision-focused-learning}), which first generates trajectories, $\trajectories^\text{expert}$ from aggregate desired behavior by calling Algorithm \ref{alg:population-edits} (line 2). It then calculates Whittle indices using the known transition probabilities and estimated rewards $\Tilde{\mathbf{R}}$ and generates a soft top k policy on the indices (line 5), which is referred to as $\pi^\text{learner}$. We then pass that policy to our evaluation function that evaluates the likelihood of seeing $\trajectories^\text{expert}$ using $\Tilde{\mathbf{R}}$. WHIRL keeps track of the gradient with respect to reward throughout so that we can 
 perform updates on $\Tilde{\mathbf{R}}$. Figure \ref{fig:reward_dfl} visualizes this sequence. We now go in depth into each part of the algorithm.

\begin{algorithm}[tb]
   \caption{WHIRL}
   \label{alg:approximate-decision-focused-learning}
\begin{algorithmic}[1]
   \STATE {\bfseries Input:} $\trajectories, \mathbf{P},$  learning rate $r$, aggregate directive. {\bfseries Initialize:} $\tilde{\mathbf{R}} = \mathbf{0}$
   \STATE $\trajectories^\text{expert} = \Big\{\text{Algorithm}~\ref{alg:population-edits}({f_\text{source}}, {f_\text{target}} , \tau$)\Big\}$_{\tau \in \trajectories}$
   \FOR{epoch $= 1,2,\cdots$}
   \STATE Compute Whittle indices $W(\tilde{\mathbf{R}})$. 
   \STATE Let $\policy^\text{learner} = \policy^{\text{soft}}_W$ and compute $\text{Eval}(\policy^\text{learner}, \trajectories^\text{expert})$.
   \STATE Update $\tilde{\mathbf{R}} = \tilde{\mathbf{R}} + \alpha \frac{d \text{Eval}(\policy^\text{learner}, \trajectories^\text{expert})}{d \policy^\text{learner}} \frac{d \policy^\text{learner}}{d W} \frac{d W}{d \tilde{\mathbf{R}}}$.
   \ENDFOR
   \noindent\STATE {\bfseries Return:} reward $\tilde{\mathbf{R}}$
\end{algorithmic}
\end{algorithm}
\paragraph{Convert Aggregate Behavior to Trajectories} \label{sec:aggregate_edit}

Because experts cannot generate per-arm trajectories for thousands of arms, we use desired \textit{aggregate behavior}, which is specified as moving interventions from some source categories of arms to target categories of arms in the observed trajectory $\tau$. We propose Algorithm \ref{alg:population-edits} which modifies $\tau$ to achieve the desired behavior specified by configurable functions $f_\text{source}$ and $f_\text{target}$, where the former defines which eligible beneficiaries to have interventions moved from and the latter defines where to re-allocate those interventions. $f_\text{source}$ and $f_\text{target}$ can handle complex logic related to beneficiaries (see Table \ref{tab:systematic} for examples). We work closely with public health experts when defining these functions. Implicitly, Algorithm \ref{alg:population-edits} also has access to beneficiary static and dynamic features to determine which beneficiaries qualify to be moved at a given timestep. We iterate over all $H$ and compute the set of arms in source category $\mathbf{{C}}^{\tau}_{h,A}$ and target category $\mathbf{{C}}^{\tau}_{h,B}$ at each step. We use the principle of maximum entropy~\cite{jaynes}, which in our case means having an equal probability of allocating any action from $\mathbf{{C}}^{\tau}_{h,A}$ to $\mathbf{{C}}^{\tau}_{h,B}$ (Theorem \ref{maxentropy}). Because of the algorithm's built-in entropy, we can rerun this algorithm multiple times to obtain many different trajectories for training WHIRL. The significance of this algorithm is that it is able to convert desired aggregate behavior into expert trajectories for all arms. This allows the application of IRL to an RMAB instance with thousands of arms and trajectories in a way that was not previously possible. 
\begin{theorem} \label{maxentropy}
Algorithm \ref{alg:population-edits} allocates actions with max entropy.
\end{theorem}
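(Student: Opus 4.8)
The plan is to cast the statement as a standard constrained entropy-maximization problem and then verify that the sampling performed by Algorithm~\ref{alg:population-edits} realizes its solution. First I would fix a timestep $h$ and trajectory $\tau$ and let $\Omega$ denote the finite set of \emph{feasible reallocations}: every way of removing interventions from arms in the source category $\mathbf{C}^{\tau}_{h,A}$ and assigning them to arms in the target category $\mathbf{C}^{\tau}_{h,B}$ that respects the budget $K$ and the eligibility encoded by $f_\text{source}$ and $f_\text{target}$. The algorithm induces a probability distribution $p$ over $\Omega$, and ``max entropy'' means that $p$ solves $\max_{q} \mathcal{H}(q)$ subject to $\sum_{\omega \in \Omega} q(\omega) = 1$ and $q \ge 0$, where $\mathcal{H}(q) = -\sum_{\omega \in \Omega} q(\omega)\log q(\omega)$ and no further constraint distinguishes one feasible allocation from another.

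The core mathematical step is the textbook fact that, over a finite support with only the normalization constraint, entropy is uniquely maximized by the uniform distribution. I would establish this either by Gibbs' inequality together with concavity of $\log$ (Jensen gives $\mathcal{H}(q) \le \log|\Omega|$ with equality iff $q(\omega) = 1/|\Omega|$ for all $\omega$), or equivalently by forming the Lagrangian $-\sum_\omega q(\omega)\log q(\omega) - \lambda\big(\sum_\omega q(\omega) - 1\big)$; stationarity yields $-\log q(\omega) - 1 - \lambda = 0$, forcing $q(\omega)$ to be constant in $\omega$ and hence uniform. Concavity of $\mathcal{H}$ guarantees this stationary point is the global maximum.

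The remaining, and genuinely delicate, step is to confirm that Algorithm~\ref{alg:population-edits} samples \emph{uniformly} from $\Omega$ rather than from some biased distribution. I would trace the algorithm's random choices and show that at each timestep it assigns every eligible source arm and every eligible target arm an equal probability of being selected, so that by the symmetry of these choices each complete feasible reallocation $\omega \in \Omega$ receives identical probability $1/|\Omega|$. Combined with the previous step, this shows $p$ attains the entropy upper bound $\log|\Omega|$ and is therefore the max-entropy distribution.

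The hard part will be this last verification: sequential sampling without replacement can fail to be uniform over the \emph{joint} configuration when the constraints couple the choices (for instance, if removing an intervention from a source arm alters which target arms remain eligible). The obstacle is thus to argue that the constraint structure here is a simple matching/product structure in which the source and target selections decouple, so that no feasible reallocation is favored over another. If instead the directive is encoded as symmetric expected-count constraints on the categories, the same conclusion follows because symmetric linear constraints leave the max-entropy solution uniform; I would note this alternative framing but expect the direct uniform-sampling argument to be the cleanest route.
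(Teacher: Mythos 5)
Your proposal is correct and rests on the same two pillars as the paper's own proof: (i) over a finite set of feasible trajectories with no constraint beyond normalization, the max-entropy distribution is the uniform one, and (ii) Algorithm~\ref{alg:population-edits} samples uniformly because each per-timestep allocation decision is a uniform random draw. The difference is in how each pillar is treated. The paper argues by contradiction: if the output were not max-entropy, some pair $\trajectory^a,\trajectory^b$ would have unequal probabilities; factoring each trajectory's probability as $\prod_{t=1}^H p_t$ over per-step allocation decisions, some step must differ, contradicting the evenness of the random choice of $c_A$. Notably, the paper never states or proves the lemma ``max entropy $\Leftrightarrow$ uniform'' --- it simply treats ``not max entropy'' as synonymous with ``two trajectories have unequal probability'' --- whereas you supply that lemma explicitly via Jensen/Gibbs or the Lagrangian stationarity argument; that is the more rigorous framing and is what your approach buys. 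Conversely, the step you correctly single out as delicate --- that sequential sampling without replacement induces a \emph{uniform} distribution over joint reallocations --- is precisely the step the paper dispatches in one sentence (``every allocation decision is made with an even probability''). Your caution is warranted: uniformity of the joint outcome relies on the symmetry of the matching structure (each target arm draws a uniformly random remaining source arm), and in the regime where $\mathbf{C}^{\tau}_{h,A}$ empties before the loop over $\mathbf{C}^{\tau}_{h,B}$ finishes, which target arms receive interventions is dictated by the loop's unrandomized iteration order, a case neither your sketch nor the paper's proof resolves. In short, your route is essentially the paper's, made more careful at both ends; completing your final uniform-sampling verification would yield a strictly stronger argument than the one published.
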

\begin{proof}
Assume Algorithm \ref{alg:population-edits} does not satisfy maximum entropy. This means that there is built-in bias where $\exists $ some $ \trajectory^a$ and $\trajectory^b$ where $P(\trajectory^b|\policy) \neq P(\trajectory^a|\policy)$. If we designate the probability of a $\trajectory^x$ occurring as $\prod_{t=1}^H p_t^x$ where $p_t$ is the probability of the allocation decisions at each time step, this means there must be at least one $t$ where $p^a_t \neq p^b_t$. However, this is impossible because at each $t$, every allocation decision is made with an even probability by line 4 of Algorithm \ref{alg:population-edits}, so our original assumption must be false.
\end{proof}
\begin{algorithm}[tb]
   \caption{Convert Aggregate Behavior to Trajectory}
   \label{alg:population-edits}
\begin{algorithmic}[1]
   \STATE {\bfseries Input:} ${f_\text{source}}, {f_\text{target}}, \trajectory$
   \FOR{h $= 1,2,\cdots H$}
       \STATE$\mathbf{{C}}^{\tau}_{A,h} = \{c_A|c_A \in {f_\text{source}}(\text{h}), a_h^{c_A,\tau} = 1\} $ , $\mathbf{{C}}^{\tau}_{B,h} = \{c_B|c_B \in {f_\text{source}}(\text{h}), a_h^{c_B,\tau} = 0 \}$ 
       \STATE $\text{arm\_selection\_prob} = 
       \frac{|\mathbf{{C}}^{\tau}_{B,h}|}{|\mathbf{{C}}^{\tau}_{A,h}|}$
       \FOR{$c_B \in \mathbf{{C}}^{\tau}_{h,B}$}
            \algorithmicif { $\mathbf{C}^{\trajectory}_{A,h}\neq\varnothing$}
               \STATE $a_h^{c_B,\tau}=1$, $a_h^{c_A,\tau}=0$ for a random $c_A \in \mathbf{{C}}^{\tau}_{A,h}$ 
               \STATE $\mathbf{{C}}^{\tau}_{A,h}\text{.pop}(c_A)$
           \
       \ENDFOR
   \ENDFOR
   \STATE {\bfseries Return:} $\tau$ with new aggregate behavior
\end{algorithmic}
\end{algorithm}

\paragraph{Policy Evaluation Function}\label{sec:policy-evaluation}

We now turn to  \textit{policy evaluation}, the green box in Figure \ref{fig:reward_dfl}. We employ a maximum likelihood estimation (MLE) evaluation approach~\cite{arora_survey_2021}. Given a set of expert trajectories $\trajectories^\text{expert}$ of size $H$, we estimate $P(\trajectories^\text{expert}|\policy^\text{learner},\Tilde{\mathbf{R}})$ and use this evaluation function to update the reward parameter via one step of gradient ascent. 
To derive the MLE evaluation function, we use $s^{i,\tau}_h$ and $a^{i,\tau}_h$ to denote the state and action of arm $i$ on trajectory $\tau$ at timestep $h$. $P(s^{i,\tau}_h, a^{i,\tau}_h, s^{i,\tau}_{h+1})$ is the transition probability $P_i$. $P(a^{i,\tau}_h|\mathbf{s^{\tau}_{h}},\policy^\text{learner})$ is the soft-top-k probability for pulling arm $i$ given $\policy^\text{learner}$ which is generated from the learned rewards in that iteration (Equation \ref{eqn:differentiable-whittle-policy}). We take the log of this value since it preserves ordering but is differentiable. Lastly, since the transition probabilities, we can drop them from the optimization and be left with Equation \ref{eqn:simplified_eval}.

\begin{align}
    &\textstyle \text{Eval}(\policy^\text{learner}, \trajectories^\text{expert}) = P(\trajectories^\text{expert}|\policy^\text{learner},\Tilde{\mathbf{R}}) \propto \log P(\trajectories^\text{expert}|\policy^\text{learner},\Tilde{\mathbf{R}}) \label{log_prop} \\ 
    &\textstyle = \sum^{\tau \in \trajectories^\text{expert}}_{i \in N} \sum_{h=1}^H \Big[\log(P(s^{i,\tau}_h, a^{i,\tau}_h, s^{i,\tau}_{h+1})) + \log(P(a^{i,\tau}_h|\mathbf{s^{\tau}_{h}},\policy^\text{learner}))\Big] \label{eqn:log-eval} \\
    &\textstyle \propto \sum^{\tau \in \trajectories^\text{expert}}_{i \in N} \sum_{h=1}^H \log(P(a^{i,\tau}_h|\mathbf{s^{\tau}_{h}},\policy^\text{learner})) \label{eqn:simplified_eval}
\end{align}

\paragraph{Gradient Updates}
The arm-wise gradient update method for updating the estimated reward is key to WHIRL’s scalability. To apply it, we need to compute $\frac {d \text{Eval}}{d \Tilde{R}}$.
\begin{align}\label{eqn:differentiable-whittle-policy-derivative}
    \textstyle \frac{d \text{Eval}(\policy^\text{learner}, \trajectories^\text{expert})}{d \Tilde{\mathbf{R}}} = \frac{d \text{Eval}(\policy^\text{learner}, \trajectories^\text{expert})}{d \policy^\text{learner}} \frac{d \policy^\text{learner}}{d W} \frac{d W}{d \Tilde{\mathbf{R}}}
\end{align}
where $W$ is the Whittle indices of all states under learned rewards $\Tilde{\mathbf{R}}$.  $\frac{d \text{Eval}(\policy^\text{learner}, \trajectories^\text{expert})}{d \policy^\text{learner}}$ can be directly computed since it is a sum of differentiable log functions (Equation \ref{eqn:simplified_eval}), and $\frac{d \policy^\text{learner}}{d W}$ was shown to be differentiable in \cite{wang_scalable_2023}. To show differentiability through Whittle index computation to derive $\frac{d W}{d \Tilde{\mathbf{R}}}$, we can express whittle indices as a linear function of $\Tilde{\mathbf{R}}$ as done in \cite{wang_scalable_2023} allowing for the computation of $\frac{d W}{d \Tilde{\mathbf{R}}}$ via auto differentiation. 
\label{dwdr} 
\begin{figure*}
    \centering
    \includegraphics[width=1\linewidth]{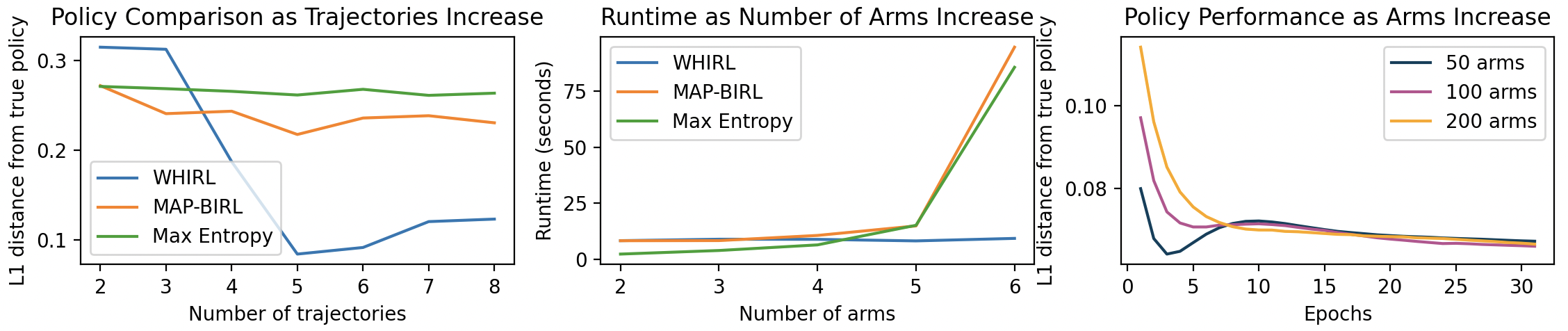}
    \caption{
    (a) Soft-k L1 norm metric (Section \ref{subsec:setup}) as the number of trajectories $J$ increases. (b) Runtime comparison averaged 9 runs (c) Soft-k L1 metric for WHIRL with increasing arms.}
    \label{fig:baselines}
\end{figure*}

\paragraph{Computation Cost and Backpropagation}\label{sec:computation-cost}
The computation time for the soft top k Whittle policy is $O(N)$ \cite{xie2020differentiable}, and there is a matrix inversion required when differentiating through $\frac{dW}{d\tilde{\mathbf{R}}}$ which has a runtime of $O(M^\omega)$ where $\omega$ is the matrix conversion constant. Therefore, the overall computation of $N$ arms and $M$ states for a gradient-based update is $O(N M^{\omega})$ per gradient step. In contrast, existing IRL algorithms would construct a large joint-state MDP's with computation cost $O(M^N)$. 
\section{Experiments}
We first  show  WHIRL is more accurate and scalable than existing IRL baselines \footnote{WHIRL converges at 30 epochs. We tune the parameter $\epsilon$ where  $\epsilon=0$ is a regular top k and $\epsilon=1$ is a completely random policy. We saw best results with $\epsilon=0.01$, $\alpha = 0.01$ for our Adam optimizer, and discount factor $\gamma=0.99$}. We use a synthetic dataset for this as it allows direct comparison with IRL baselines that take expert trajectories as inputs, and to compare how close WHIRL can get to the original true policy, which we only have access to in synthetic settings. Next we show how WHIRL achieves desired public health outcomes in the real world MCH setting. 

\subsection{Synthetic Dataset: Setup and results}
\label{subsec:setup}
\paragraph{Setup} For the synthetic experiments, we generate $\trajectories^\text{expert}$ from a soft-top-k Whittle policy  on randomly generated  $\mathbf{R}$ and  $\mathbf{P(s,a,s')}$ for all arms (instead of generating from Alg \ref{alg:population-edits}). Transition probabilities have an additional constraint that pulling the arm ($a=1$) is strictly better than not pulling the arm ($a=0$) to ensure benefit of pulling. While learning, we only have access to  $\mathbf{P(s,a,s')}$ and $\trajectories^\text{expert}$. We first consider a RMAB problem composed of $N = 2$ arms, $M=2$ states, budget $K=1$, and time horizon $H=3$. We use this small problem to compare WHIRL with baselines because the latter become computationally intractable for many arms. The results are averaged over 64 runs. For WHIRL only, we also consider RMAB settings composed of $N$ = 50, 100, and  200, $M=2$, $K=20$, and $T=10$. These results are averaged over 48 runs. We compare WHIRL with (1) Max Entropy IRL and (2) MAP-B-IRL.\footnote{Code is used from \cite{suresh2022marginal} and max-ent python library} To evaluate performance we compute the L1 norm between the learned soft-k and expert soft-k policy probabilities $\lVert \policy^{\text{soft}}_W(R^\text{expert}) -\policy^{\text{soft}}_W(R^\text{learner}) \rVert_1$ normalized by $N * J * H$. This metric allows us to test whether the policy from the learned rewards mimics the decision-making used to create $\trajectories^\text{expert}$. Such a metric is infeasible with the real dataset, as the true rewards are not available. 

\paragraph{Experimental Results}\label{sec:results}
Figure~\ref{fig:baselines}a shows that WHIRL performs significantly better than the baselines, maximum entropy and MAP-BIRL, as we increase trajectories. All three algorithms have a slight decrease in performance after 5 trajectories, but that effect goes away as we increase the horizon size. We attribute this to an overfitting of reward learning with smaller data samples which improves with more data. Figure~\ref{fig:baselines}c shows that WHIRL can learn rewards leading to a near-optimal policy on larger problems of $N=50$, 100, and 200 with just 3 input trajectories. The $\policy^{\text{soft}}_W(R^\text{expert})$ will not be available for real-world data, so this experiment validates that WHIRL can learn a nearly optimal policy with very few training trajectories on large problem sizes. Finally, Figure \ref{fig:baselines}b compares the computation cost per gradient step of WHIRL versus the two baselines by changing $N$ in the $M=2$-state RMAB problem. Both baselines grow exponentially w.r.t. N and become unusable past 6 arms. 

\subsection{Maternal Health dataset: Setup and results} \label{applied}
We also analyze our algorithm on an MCH dataset provided by Armman from a deployment in 2021 in which each beneficiary was modeled as an RMAB arm with a naive \{0,1\} reward. This data records beneficiary listening and beneficiaries selected for human calls in the form $(s,a)$ for 10 weeks of the program. $s=1$ if the beneficiary listened to over 30 seconds of a call, and $a=1$ if the beneficiary was intervened on in that timestep. \footnote{The dataset originally contains 7668 beneficiaries but Armman does not intervene on mothers who are already high listeners, so only 2127 mothers are eligible for calls.} The dataset contains 2127 Armman beneficiaries from the city of Mumbai. They are fully anonymized. We also have access to 13 features about each beneficiary including demographic features like income, education, and language, and also health information like gestational age, previous number of children/miscarriages. Sensitive features such as
caste and religion are not collected. Armman sends out 71 human calls per timestep for the 2127 beneficiaries, and we estimate beneficiary transition data by aggregating each beneficiary's listening patterns over the 10 timesteps and clustering them \cite{mate2022field}. Lastly, beneficiaries must consent to be in the program. 
\subsubsection{Setup} WHIRL is part of an overall system (Figure \ref{fig:system}) which works as follows: \textbf{Step 1:} Given execution of the current RMAB policy over the last H timesteps (H=10 weeks in our experiments), the health expert (in this case, a service call manager) sees aggregated statistics of which categories of beneficiaries obtained service calls (e.g. Figure \ref{fig:before_after_actions_risk}a). For this step, we show a diverse set of aggregate statistics including actions based on demographic information (i.e. income, education, language, phone ownership) but also dynamic information like what states people are at when they are called. The service call manager gives aggregate level feedback based on these statistics. \textbf{Step 2:} The feedback is used to create $\trajectories^\text{expert}$ from which WHIRL generates new rewards. \textbf{Step 3:} Using a new policy obtained from WHIRL rewards, the system conducts a what-if analysis by synthetically simulating future trajectories for the original vs WHIRL rewards, and compares them to show the difference in service call allocations and impact on listening to AVMs by using WHIRL (Figure \ref{fig:risk_absolute_change}). \textbf{Step 4:} If the service call manager approves, Armman should use the new WHIRL rewards  to plan actions for the same beneficiaries for timesteps H+1 onwards. If not, the process repeats from step 2 with new feedback from the service call manager. We will now analyze 2 examples in depth, and also provide a systematic evaluation of WHIRL (Table \ref{tab:systematic}). 

\subsubsection{Risk-Based Rewards} \label{sec:risk_rewards} To characterize beneficiaries, Armman uses risk score $= \boldsymbol 1_{\text{education level} < \text{threshold}} + \boldsymbol 1_{\text{income} < \text{threshold}} + \boldsymbol 1_{\text{phone ownership} = \text{False}}$, where thresholds are set by Armman. Armman has previously found that these scores highly correlate with eventual health outcomes. We notice that a large percentage of currently called beneficiaries come from risk score 0 and 1 (Figure \ref{fig:before_after_actions_risk}a), and yet those beneficiaries are not ideal targets for interventions because they already have access to many other resources, so the lift from Armman's intervention is very low. 
We apply WHIRL to this use case, using the aggregate feedback: \textit{Move interventions from risk groups {0,1} to {2,3}}. The actions from the original $\trajectory$ and $\trajectories^\text{expert}$ are shown in Figure \ref{fig:before_after_actions_risk}. Most actions are moved from low \{0,1\} to high \{2,3\} risk beneficiaries\footnote{Not all actions are always moved because there is built in entropy in Algorithm \ref{alg:population-edits}}. We then simulate the RMAB policy for the following 10 timesteps (averaged over 60 runs) for WHIRL rewards versus the \{0,1\}. WHIRL rewards significantly shift interventions toward high-risk beneficiaries (Figure \ref{fig:risk_absolute_change}b). We also see a bump in the number of AVM calls heard (``listenership'') for risk scores 2 and 3 (Figure \ref{fig:risk_absolute_change}b). The listenership and actions for groups 1 and 2 also decrease, which may be an acceptable trade-off since Risk 2 and 3 score mothers have more to gain from the calls according to Armman.

\begin{figure}[htbp]
    \centering
    \begin{minipage}{0.48\textwidth}
        \centering
        \includegraphics[width=\linewidth]{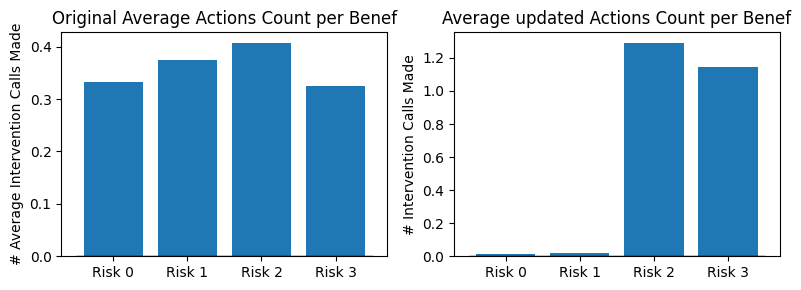}
        \caption{The average number of actions taken on beneficiaries in each risk group before and after the aggregate feedback (Alg \ref{alg:population-edits}).} 
        \label{fig:before_after_actions_risk}
    \end{minipage}
    \hfill
    \begin{minipage}{0.48\textwidth}
        \centering
        \includegraphics[width=\linewidth]{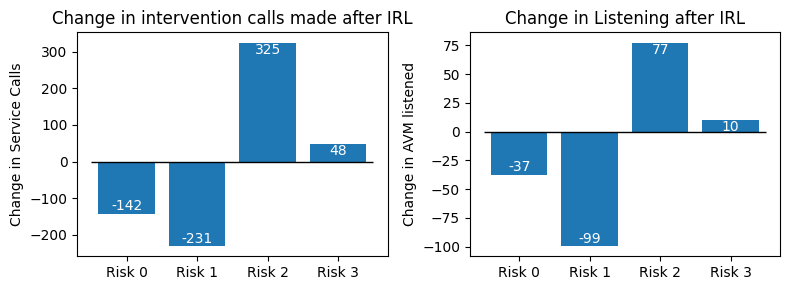}
        \caption{What-if analysis: Difference in (a) interventions (b) AVM calls listened to across risk between original and WHIRL rewards.} 
        \label{fig:risk_absolute_change}
    \end{minipage}
\end{figure}
A natural question at this point is to understand whether we could have hand crafted rewards to achieve the same effect. We try an intuitive assignment where we give all low-risk beneficiaries a reward of [0.3,0.3] for their engaging and disengaging state. This should make them unattractive to pull since they receive the same reward from both states. We keep high-risk beneficiaries at a reward of [0,1] where 0 is for the disengaged state and 1 is for the engaged state. This will make them attractive to keep engaged. After simulating hand crafted and WHIRL over 60 runs, results in Figure \ref{fig:action_distribution}a show that the hand crafted rewards distribute actions extremely unfairly where over 60\% (388 out of 597) of the high-risk beneficiaries have a 0\% probability of ever being called. WHIRL on the other hand calls beneficiaries much more evenly, only leaving 7 beneficiaries with a 0\% probability (Figure \ref{fig:action_distribution}b). This directly addresses our original goal of wanting evenly redistribute interventions to those who need it most. If we only relied on hand-crafted rewards we would still end up with unfair policies that try to maximize listening without taking into account how interventions are allocated across the group.
\begin{figure}
    \centering
    \includegraphics[width=.9\linewidth]{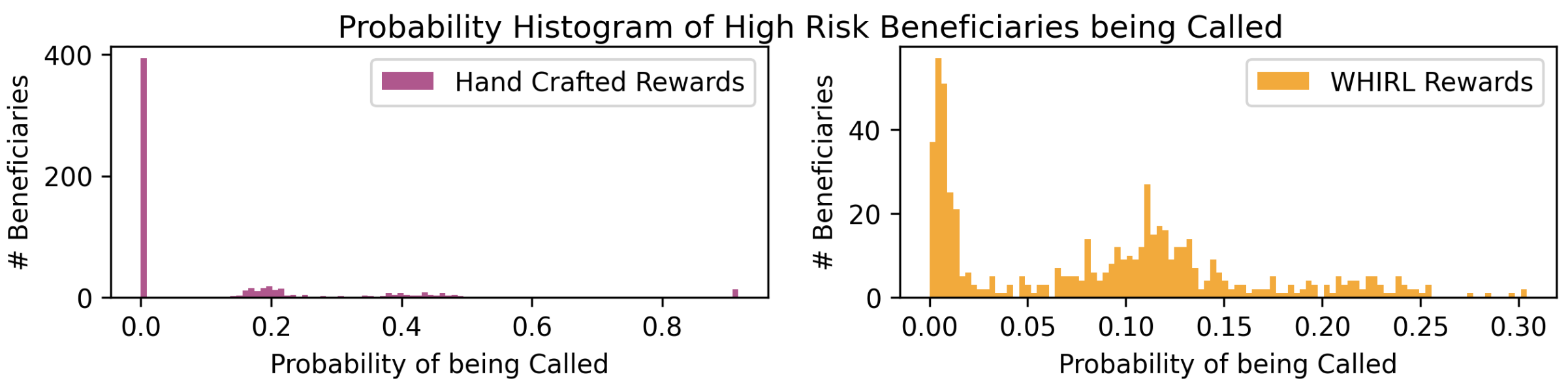}
    \caption{Histogram of high risk beneficiaries and their probabilities of intervened on.} 
    \label{fig:action_distribution}
\end{figure}
\subsubsection{Learning Rewards in Complex Settings}
We consider a more challenging behavior in which beneficiaries listen to at least 1 AVM in the preceding $T$ weeks. There is repetition of content in the AVM \cite{podcast}, and it is desirable to maximize the amount of \emph{distinct} content heard by beneficiaries rather than total calls.The state space increases from $2^1$ to $2^T$. 
It is more difficult to define reward functions as the state space increases, but it could make sense to call beneficiaries that have not heard any of the recent messages. Consider $T=3$, a setting with 8 states: \{000,001,010,011,100,101,110,111\}. The third digit is the most recent week's listening state. State 000 is the worst state. State 100 is problematic because a beneficiary may transition to 000 where they have not listened to any of the last 3 AVMs (state 010 is slightly better, but still close to reaching the 000 state). To prevent such situations, the service call manager uses the command: \textit{Move all service calls from states \{111,101,110,011,001\} to \{000,100,010}\}. 

After using WHIRL, actions change corresponding to the desired behavior (Figure \ref{fig:8_state}a), but we also see some interesting side effects. The number of interventions in state 110 went up even though that was not specified in the command. This is likely because the next passive state for 110 is 100 and WHIRL seems to have figured out that is an undesirable state and should be avoided. In terms of listening (Figure~\ref{fig:8_state}b), visitation in states 000 and 111 decreases. As intervention actions in 000 increased, the number of beneficiaries in 000 decreased. Perfect listening is not required, so service calls in 111 went down and hence the number of beneficiaries in the state decreased. Second, as desired, visitation for all of the middle states (i.e., states where 1 or 2 of the last 3 AVMs were listened to) increased. 
Visits to 001 increased because of increased interventions in its neighboring state 000, causing transition to 001. 
We lastly stress test our algorithm with diverse commands that capture more of Armman's goals (Table \ref{tab:systematic}). Even when we make the commands more complex with multiple conditions, we are able to make corresponding improvements in listening behavior. These cases are impossible for a human to manually design rewards for, and yet WHIRL is able to align its rewards with expert preferences. 

\begin{figure}
    \centering
    \includegraphics[width=.7\linewidth]{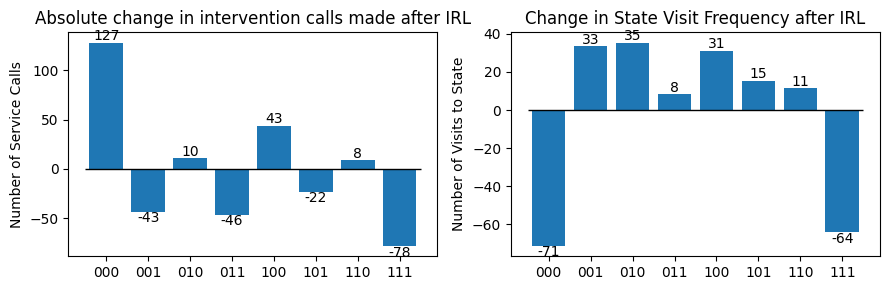}
    \caption{Difference in (a) actions, (b) state visits for 8 state case for naive vs WHIRL rewards} 
    \label{fig:8_state}
\end{figure}

\begin{table*}
  \centering
  \caption{WHIRL on a diverse set of commands. All results are averaged over 60 runs.}
  \label{tab:systematic}
  \begin{tabular}{@{}p{0.4\linewidth}@{\hspace{.3cm}}p{0.6\linewidth}@{}}
    \toprule
    \textbf{Command} & \textbf{Result} \\
    \midrule
    1. Move no more than 30 interventions/ week from low risk to high risk & 1. \textbf{18\%} increase in calls, \textbf{0.3\%} increase in listening to high risk groups, but smaller change than Fig \ref{fig:risk_absolute_change}. \\
    2. Move interventions from state 01,11 to state 10,00 but only from Marathi  to non-Marathi speakers. & 2a. \textbf{9\%} increase in calls to 00, 10 state, \textbf{0.5\%} increase in listening in 01, 10 state. 2b. \textbf{16\%} increase in calls and \textbf{0.4\%} increase in listening for non-Marathi speakers. No decrease in calls to non-Marathi speakers in states 01,11. \\
    3. Move interventions from low to high values of P(0,1,1) - P(0,0,1) & 3. \textbf{260\%} increase in calls and \textbf{11\%} increase listening for beneficiaries with higher gaps in transition probability \\
    4. Move interventions from highly educated and women phone owners to low educated and family/husband owned phone owners & 4a. \textbf{25\%} increase in calls and \textbf{0.4\%} increase in listenership for lower education mothers. 4b. \textbf{54\%} increase in calls and \textbf{1\%} increase in listenership for mothers who don’t own their phones. \\
    \bottomrule
  \end{tabular}
\end{table*}

\section{Discussion and Deployment} 
Our experiments demonstrate a need for techniques such as WHIRL since desirable outcomes are not easy to define in the RMAB reward model and rewards can only be learned subject to resource constraints and uncertain beneficiary transitions. WHIRL enables the human expert to view the implications of new desired behavior before utilizing the new rewards. Additionally, WHIRL's ability to incorporate large-scale aggregate feedback enabled the creation of 2,127 expert trajectories, making it feasible to use IRL in public health settings in a way that was not previously possible. In India, outcomes for mobile health programs vary significantly based on state-level advocacy and cultural norms \cite{mohan_optimising_2022}, so we expect service call managers in different states to have different aggregate feedback. As a result, the system will require testing with more service call managers across different states and better understand how our tool performs for differing goals. We publicly release our code at \url{https://github.com/Gjain234/WHIRL} to facilitate future work. While there is still more to be done before deployment, our work paves the way for a new feedback-driven use of RMABs for large-scale resource allocation.

\subsection{Ethical Discussion} \label{ethical_statement}
Resource allocation implicitly brings ethical challenges with it. We are learning new rewards in a resource constrained environment, so people that initially were assigned to receive interventions perhaps will not receive them anymore. However, the motivation of our project in the first place was to better align allocation of resources to expert preferences, and we show that with WHIRL we are able to send more calls to high risk beneficiaries (Figure \ref{fig:risk_absolute_change}). Another concern may be that the expert gives an incorrect directive; to safeguard against that, we rely on the what-if analysis, where we show the expert summary statistics about how the new rewards will affect who is called, and the expert can decide to go forward with those rewards (Section \ref{applied}). 

\section{Acknowledgements}
Gauri Jain is supported by the National Science Foundation under Grant No. IIS-2229881. Pradeep Varakantham is supported by the Lee Kuan Yew Fellowship fund awarded by Singapore Management University. We thank Prasanth Suresh for his support in working with MAP-BIRL. We lastly thank Neha Madhiwalla and the entire Armman team for their advice and feedback throughout the project. 

\bibliographystyle{splncs04}
\bibliography{references}
\end{document}